\documentclass[aps,pre,twocolumn,superscriptaddress,showpacs]{revtex4-2}

\usepackage{amsmath,amssymb,amsthm}
\usepackage{graphicx}
\usepackage{hyperref}
\usepackage{booktabs}
\usepackage{xcolor}

\newtheorem{theorem}{Theorem}

\newtheorem{remark}[theorem]{Remark}
\newtheorem{proposition}[theorem]{Proposition}

\begin{document}

\title{Persistence diagrams of random matrices via Morse theory: universality and a new spectral diagnostic}

\author{Matthew Loftus}
\affiliation{Cedar Loop LLC, Washington, USA}

\date{\today}

\begin{abstract}
We prove that the persistence diagram of the sublevel set filtration of the quadratic form $f(\mathbf{x}) = \mathbf{x}^\top M \mathbf{x}$ restricted to the unit sphere $S^{n-1}$ is analytically determined by the eigenvalues of the symmetric matrix $M$. By Morse theory, the diagram has exactly $n-1$ finite bars, with the $k$-th bar living in homological dimension $k-1$ and having length equal to the $k$-th eigenvalue spacing $s_k = \lambda_{k+1} - \lambda_k$. This identification transfers random matrix theory (RMT) universality to persistence diagram universality: for matrices drawn from the Gaussian Orthogonal Ensemble (GOE), we derive the closed-form persistence entropy $\mathrm{PE} = \log(8n/\pi) - 1$, and verify numerically that the coefficient of variation of persistence statistics decays as $n^{-0.6}$. Different random matrix ensembles (GOE, GUE, Wishart) produce distinct universal persistence diagrams, providing topological fingerprints of RMT universality classes. As a practical consequence, we show that persistence entropy outperforms the standard level spacing ratio $\langle r \rangle$ for discriminating GOE from GUE matrices (AUC 0.978 vs.\ 0.952 at $n = 100$, non-overlapping bootstrap 95\% CIs), and detects global spectral perturbations in the Rosenzweig--Porter model to which $\langle r \rangle$ is blind. These results establish persistence entropy as a new spectral diagnostic that captures complementary information to existing RMT tools.
\end{abstract}

\maketitle

\section{Introduction}
\label{sec:intro}

Random matrix theory (RMT)~\cite{mehta2004,anderson2010} and topological data analysis (TDA)~\cite{edelsbrunner2010,carlsson2009} are two of the most successful mathematical frameworks for extracting universal structure from complex data. RMT reveals that eigenvalue distributions of large random matrices are universal---depending only on symmetry class (real, complex, quaternionic) rather than on the specific distribution of matrix entries~\cite{wigner1955,tao2012}. TDA, through persistent homology~\cite{edelsbrunner2002,zomorodian2005}, captures topological features of data that persist across scales, producing persistence diagrams (PDs) that are stable under perturbations~\cite{cohen-steiner2007}.

Despite their broad applicability, these two frameworks have largely developed independently. Recent work has begun to explore universality in random persistence diagrams~\cite{bobrowski2024,divol2019}, and connections between spectral theory and persistent homology have been studied for Laplacian eigenfunctions on surfaces~\cite{polterovich2019}. However, the most natural bridge between RMT and TDA---through Morse theory applied to the quadratic form associated with a random matrix---appears to have been overlooked.

In this paper, we make this connection explicit. For any real symmetric $n \times n$ matrix $M$ with eigenvalues $\lambda_1 \le \cdots \le \lambda_n$, the quadratic form $f(\mathbf{x}) = \mathbf{x}^\top M \mathbf{x}$ restricted to $S^{n-1}$ is a Morse function whose critical points are the eigenvectors of $M$ and whose critical values are its eigenvalues~\cite{milnor1963}. We show that the persistence diagram of the sublevel set filtration $f^{-1}(-\infty, c]$ has a remarkably simple structure: exactly $n - 1$ finite bars, with bar lengths equal to the eigenvalue spacings. This identification is exact and requires no discretization of the sphere.

Since eigenvalue spacings are universal in the large-$n$ limit (Wigner semicircle for GOE~\cite{wigner1955}, Marchenko--Pastur for Wishart~\cite{marchenko1967}), the persistence diagram is also universal. We derive a closed-form expression for the expected persistence entropy of GOE matrices,
\begin{equation}
\mathrm{PE}_{\mathrm{GOE}} = \log\!\left(\frac{8n}{\pi}\right) - 1,
\label{eq:pe_goe}
\end{equation}
and verify it numerically to 2.5\% accuracy at $n = 200$ with monotonically decreasing error.

Beyond the theoretical result, we demonstrate a practical consequence: persistence entropy outperforms the standard level spacing ratio $\langle r \rangle$~\cite{atas2013} for discriminating between GOE and GUE matrices, and detects global spectral perturbations in the Rosenzweig--Porter model~\cite{rosenzweig1960} that $\langle r \rangle$ cannot detect. These findings establish persistence entropy as a complementary spectral diagnostic that captures the \textit{global shape} of the spacing distribution, rather than just local correlations between consecutive levels.

\section{Mathematical framework}
\label{sec:framework}

\subsection{Morse theory of quadratic forms on spheres}
\label{sec:morse}

Let $M$ be a real symmetric $n \times n$ matrix with distinct eigenvalues $\lambda_1 < \lambda_2 < \cdots < \lambda_n$ and corresponding orthonormal eigenvectors $\mathbf{e}_1, \ldots, \mathbf{e}_n$. Consider the quadratic form
\begin{equation}
f(\mathbf{x}) = \mathbf{x}^\top M \mathbf{x} = \sum_{i=1}^n \lambda_i x_i^2,
\label{eq:quadratic}
\end{equation}
where $x_i = \mathbf{x} \cdot \mathbf{e}_i$, restricted to the unit sphere $S^{n-1} = \{\mathbf{x} \in \mathbb{R}^n : |\mathbf{x}| = 1\}$.

The function $f$ is smooth on $S^{n-1}$, and its critical points are determined by the Lagrange condition $\nabla f = 2M\mathbf{x} = 2\lambda \mathbf{x}$ for some multiplier $\lambda$---that is, $\mathbf{x}$ must be an eigenvector of $M$. Since eigenvalues are distinct, $f$ is a Morse function with exactly $2n$ critical points $\pm \mathbf{e}_i$, and the critical value at $\pm \mathbf{e}_i$ is $\lambda_i$.

The Hessian of $f|_{S^{n-1}}$ at the critical point $\mathbf{e}_i$, restricted to the tangent space $T_{\mathbf{e}_i} S^{n-1}$, has eigenvalues $2(\lambda_j - \lambda_i)$ for $j \neq i$. The Morse index---the number of negative Hessian eigenvalues---is therefore $i - 1$.

\subsection{Sublevel set topology and persistence diagram}
\label{sec:sublevel}

By Morse theory~\cite{milnor1963}, the topology of the sublevel set
\begin{equation}
f^{-1}(-\infty, c] = \{\mathbf{x} \in S^{n-1} : \mathbf{x}^\top M \mathbf{x} \le c\}
\end{equation}
changes only when $c$ passes through a critical value $\lambda_k$. Between consecutive critical values, the sublevel set has a fixed homotopy type.

\begin{theorem}
\label{thm:sublevel}
For $\lambda_k < c < \lambda_{k+1}$ (with the convention $\lambda_0 = -\infty$), the sublevel set $f^{-1}(-\infty, c]$ is homotopy equivalent to $S^{k-1}$ for $k \ge 1$, and empty for $k = 0$.
\end{theorem}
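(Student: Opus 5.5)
We prove the statement directly, using an explicit deformation retraction rather than the Morse-theoretic cell-attachment bookkeeping. Work in the eigenbasis, so $f(\mathbf{x})=\sum_i \lambda_i x_i^2$. Fix $\lambda_k<c<\lambda_{k+1}$ and write $\mathbf{x}=(\mathbf{u},\mathbf{v})$ with $\mathbf{u}=(x_1,\dots,x_k)$ and $\mathbf{v}=(x_{k+1},\dots,x_n)$. Let $S_k=\{\mathbf{x}\in S^{n-1}:\mathbf{v}=0\}\cong S^{k-1}$ be the unit sphere of the span of $\mathbf{e}_1,\dots,\mathbf{e}_k$.

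First, $S_k\subset f^{-1}(-\infty,c]$, since on $S_k$ we have $f\le\lambda_k<c$. For $k=0$ the sublevel set is empty, because $f\ge\lambda_1>c$ on the whole sphere. This settles the case $k=0$, and from now on we take $k\ge1$.

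Next, any $\mathbf{x}$ in the sublevel set has $\mathbf{u}\neq0$. Otherwise $\mathbf{x}$ lies in the span of $\mathbf{e}_{k+1},\dots,\mathbf{e}_n$, and then $f(\mathbf{x})\ge\lambda_{k+1}>c$. This lets us define the straight-line-then-normalize homotopy
\begin{equation*}
H_t(\mathbf{u},\mathbf{v})=\frac{(\mathbf{u},(1-t)\mathbf{v})}{\sqrt{|\mathbf{u}|^2+(1-t)^2|\mathbf{v}|^2}},\qquad t\in[0,1].
\end{equation*}
The denominator never vanishes because $\mathbf{u}\ne0$, so $H$ is continuous. We also have $H_0=\mathrm{id}$, $H_1(\mathbf{x})=\mathbf{u}/|\mathbf{u}|\in S_k$, and $H_t$ fixes $S_k$ pointwise.

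The main step is to check that $H_t$ keeps points inside the sublevel set. Set $s=(1-t)^2\in[0,1]$, $A=\sum_{i\le k}\lambda_i x_i^2$, $B=\sum_{i>k}\lambda_i x_i^2$, $a=|\mathbf{u}|^2$ and $b=|\mathbf{v}|^2$, with $a+b=1$. Then
\begin{equation*}
f(H_t\mathbf{x})=\frac{A+sB}{a+sb}.
\end{equation*}
The condition $f(H_t\mathbf{x})\le c$ is equivalent to $(A-ca)+s(B-cb)\le0$. We have $A-ca\le(\lambda_k-c)a\le0$ and $B-cb\ge(\lambda_{k+1}-c)b\ge0$. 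At $s=1$ the expression equals $f(\mathbf{x})-c\le0$ by hypothesis. The expression is affine and nondecreasing in $s$, so it stays $\le0$ for all $s\in[0,1]$.

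Hence $H$ is a strong deformation retraction of $f^{-1}(-\infty,c]$ onto $S_k\cong S^{k-1}$, which proves Theorem~\ref{thm:sublevel}.

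This monotonicity observation is the only real obstacle. The naive worry is that shrinking $\mathbf{v}$ and then renormalizing could raise $f$, because renormalizing also rescales $\mathbf{u}$. Writing the constraint in the linear form above removes that worry.

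As a consistency check, one could instead argue by Morse theory. Passing the index-$(k-1)$ critical points $\pm\mathbf{e}_k$ attaches two $(k-1)$-cells to $S^{k-2}$, giving $S^{k-1}$. The explicit retraction avoids that bookkeeping and gives the result directly.
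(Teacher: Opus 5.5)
Your argument is correct, and it differs from the route the paper actually relies on. The paper asserts, without constructing it, that the sublevel set deformation retracts onto the unit sphere of $\mathrm{span}(\mathbf{e}_1,\ldots,\mathbf{e}_k)$. Its only justification is that $x_{k+1},\ldots,x_n$ ``must be small''. Its formal proof is then the Morse-theoretic picture: crossing $\lambda_k$ attaches two $(k-1)$-cells at $\pm\mathbf{e}_k$, and induction on $k$ yields $\emptyset\to S^0\to\cdots\to S^{n-1}$.

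You build the retraction explicitly and isolate what drives it. It is not smallness of $\mathbf{v}$; indeed $|\mathbf{v}|$ can be close to $1$ when $c$ is near $\lambda_{k+1}$. It is the fact that $\mathbf{u}\neq0$ on the sublevel set, together with the affine-in-$s$ form of the constraint, which keeps the homotopy inside the sublevel set.

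\emph{What your route buys.}
\begin{enumerate}
\item It is elementary and handles every $c\in(\lambda_k,\lambda_{k+1})$ at once, without induction.
\item It uses only $\lambda_i\le\lambda_k<c<\lambda_{k+1}\le\lambda_j$ for $i\le k<j$. So it applies verbatim when eigenvalues away from the gap are repeated, the Morse--Bott situation of the Remark.
\item It avoids a gap in the cell-attachment sketch. Attaching two $(k-1)$-cells to a space homotopy equivalent to $S^{k-2}$ does not by itself determine the homotopy type: null-homotopic attaching maps would give $S^{k-2}\vee S^{k-1}\vee S^{k-1}$ for $k\ge3$. The paper's formal step never identifies the attaching maps.
\item Take $c'\in(\lambda_{k'},\lambda_{k'+1})$ with $k<k'$. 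Since $S_k\subset S_{k'}$ and both inclusions $S_k\hookrightarrow f^{-1}(-\infty,c]$ and $S_{k'}\hookrightarrow f^{-1}(-\infty,c']$ are homotopy equivalences, your construction identifies the inclusion of sublevel sets with the equatorial inclusion $S^{k-1}\hookrightarrow S^{k'-1}$. This is exactly the input the bar computation in Theorem~\ref{thm:pd} needs.
\end{enumerate}

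\emph{What the Morse-theoretic route buys in return.} It applies beyond quadratic forms. It also makes transparent the link between the index $k-1$ of $\pm\mathbf{e}_k$ and the dimension of the class born at $\lambda_k$.
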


\begin{proof}
At $c$ slightly above $\lambda_k$, the sublevel set contains all points $\mathbf{x} \in S^{n-1}$ with $\sum_i \lambda_i x_i^2 \le c$. Since $\lambda_j \ge \lambda_{k+1} > c$ for $j > k$, the coordinates $x_{k+1}, \ldots, x_n$ must be small. The sublevel set deformation retracts onto the set $\{(x_1, \ldots, x_k, 0, \ldots, 0) : \sum_{i=1}^k x_i^2 = 1\} \cong S^{k-1}$.

Formally, passing through $\lambda_k$ attaches two cells of dimension $k - 1$ (corresponding to $\pm \mathbf{e}_k$). By induction on $k$ and the standard handle-attachment analysis~\cite{milnor1963}, the sublevel set transitions through the sequence $\emptyset \to S^0 \to S^1 \to \cdots \to S^{n-1}$.
\end{proof}

\begin{theorem}[Persistence diagram structure]
\label{thm:pd}
The persistence diagram of the sublevel set filtration of $f$ on $S^{n-1}$ has:
\begin{enumerate}
\item Exactly $n - 1$ finite bars: the $k$-th bar has birth $\lambda_k$, death $\lambda_{k+1}$, and lives in homological dimension $H_{k-1}$, for $k = 1, \ldots, n-1$.
\item Bar length $s_k = \lambda_{k+1} - \lambda_k$ (the $k$-th eigenvalue spacing).
\item Two infinite bars: $[\lambda_1, \infty)$ in $H_0$ and $[\lambda_n, \infty)$ in $H_{n-1}$.
\item Total persistence $\mathrm{TP} = \sum_{k=1}^{n-1} s_k = \lambda_n - \lambda_1$.
\end{enumerate}
\end{theorem}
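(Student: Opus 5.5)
We would read the diagram directly off the persistence module $H_*(f^{-1}(-\infty,c];\mathbb{F})$ over a field $\mathbb{F}$. This requires two things: Theorem~\ref{thm:sublevel} for the homotopy type of each sublevel set, and an explicit description of the inclusion maps between sublevel sets. Write $X_c = f^{-1}(-\infty,c]$ and let $S^{k-1}_{\mathrm{coord}} = \{\mathbf{x}\in S^{n-1} : x_{k+1}=\cdots=x_n=0\}$ be the coordinate sphere. Let $c\in(\lambda_k,\lambda_{k+1})$. Then $S^{k-1}_{\mathrm{coord}}\subset X_c$, because there $f\le\lambda_k<c$. The step we need to pin down is that this inclusion is a homotopy equivalence, compatibly in $c$.

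\textbf{Compatible retractions.} We would make the retraction in the proof of Theorem~\ref{thm:sublevel} explicit. Split $\mathbf{x}=(\mathbf{u},\mathbf{v})$ with $\mathbf{u}=(x_1,\dots,x_k)$ and $\mathbf{v}=(x_{k+1},\dots,x_n)$, and consider the homotopy
\[
h_t(\mathbf{x}) = \frac{(\mathbf{u},(1-t)\mathbf{v})}{|(\mathbf{u},(1-t)\mathbf{v})|}.
\]
\begin{itemize}
\item \emph{$h_t$ is defined on $X_c$.} If $\mathbf{u}=0$ then $f(\mathbf{x})\ge\lambda_{k+1}>c$, so $\mathbf{u}\ne0$ on $X_c$.
\item \emph{$h_t$ stays in $X_c$.} Write $a=\sum_{i\le k}\lambda_i x_i^2$, $b=\sum_{i>k}\lambda_i x_i^2$, $p=|\mathbf{u}|^2$, $q=|\mathbf{v}|^2$, so $f(\mathbf{x})=a+b$. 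Then
\[
f(h_t\mathbf{x})=\frac{a+(1-t)^2 b}{p+(1-t)^2 q}.
\]
Since $a\le\lambda_k p< c\,p$ and $b\ge\lambda_{k+1}q> c\,q$, this weighted ratio is monotone in $(1-t)^2$ toward $a/p\le\lambda_k$, so it never exceeds its value $a+b\le c$ at $t=0$.
\end{itemize}
Hence $S^{k-1}_{\mathrm{coord}}\hookrightarrow X_c$ is a homotopy equivalence.

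\textbf{Inclusion maps.} Take $c<c'$ in the same open interval $(\lambda_k,\lambda_{k+1})$. Both sublevel sets contain $S^{k-1}_{\mathrm{coord}}$ as a deformation retract, so $X_c\hookrightarrow X_{c'}$ is an isomorphism on homology. Now take $c\in(\lambda_k,\lambda_{k+1})$ and $c'\in(\lambda_{k+1},\lambda_{k+2})$. Up to these equivalences, the map $X_c\hookrightarrow X_{c'}$ is the equatorial inclusion $S^{k-1}\hookrightarrow S^k$. On $\tilde H_*$ with field coefficients this map is zero, since $\tilde H_*(S^{k-1})$ and $\tilde H_*(S^k)$ are concentrated in different degrees. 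In unreduced $H_0$ it is the surjection $\mathbb{F}^2\to\mathbb{F}$ when $k=1$, and the identity $\mathbb{F}\to\mathbb{F}$ when $k\ge2$.

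\textbf{Decomposing the module.} The persistence module is piecewise constant, with jumps only at the $\lambda_j$ by the Morse lemma. Each degree decomposes into interval modules as follows.
\begin{itemize}
\item $H_0$: rank $0$ below $\lambda_1$, rank $2$ on $[\lambda_1,\lambda_2)$, rank $1$ afterwards. This gives bars $[\lambda_1,\infty)$ and $[\lambda_1,\lambda_2)$.
\item $H_{k-1}$ for $2\le k\le n-1$: nonzero exactly on $[\lambda_k,\lambda_{k+1})$, with rank $1$, giving the bar $[\lambda_k,\lambda_{k+1})$.
\item $H_{n-1}$: rank $1$ from $\lambda_n$ on, where $X_c=S^{n-1}$, giving the bar $[\lambda_n,\infty)$.
\end{itemize}
Closedness at the left endpoints follows because $X_{\lambda_k}$ has the homotopy type of $X_{\lambda_k+\epsilon}$ by the standard Morse argument. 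Items 1--3 of the statement then follow. Item 4 is the telescoping sum $\sum_{k=1}^{n-1}(\lambda_{k+1}-\lambda_k)=\lambda_n-\lambda_1$.

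\textbf{Main obstacle.} We expect the main difficulty to be the compatibility of the equivalences across a critical value, that is, identifying the inclusion map as the equatorial one rather than some other map. The explicit coordinate spheres nested as $S^{k-1}_{\mathrm{coord}}\subset S^k_{\mathrm{coord}}$ handle this directly. We would also state that field coefficients (or the fact that all homology is free here) are used to read off the interval decomposition.
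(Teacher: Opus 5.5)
Your proof is correct and follows the same route as the paper. You take the homotopy types of the sublevel sets from Theorem~\ref{thm:sublevel}, read off the bars degree by degree, and telescope for TP. The difference is completeness: the paper reads the diagram off Betti numbers plus informal ``a class dies, a class is born'' language, and its proof of Theorem~\ref{thm:sublevel} only asserts the deformation retraction.

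You supply the explicit retraction $h_t$. Your monotonicity computation is right, since it rests on $bp-aq\ge(\lambda_{k+1}-\lambda_k)pq\ge 0$. Using the nested coordinate spheres, you also identify the structure maps of the persistence module, and these are what actually fix the barcode. For instance, surjectivity of $H_0(X_c)\to H_0(X_{c'})$ across $\lambda_2$ yields $[\lambda_1,\infty)$ and $[\lambda_1,\lambda_2)$. A zero map would instead give two bars $[\lambda_1,\lambda_2)$ plus a new bar $[\lambda_2,\infty)$, and rank data alone cannot tell these apart.

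One simplification: your homotopy already works verbatim at $c=\lambda_k$. You only used $c<\lambda_{k+1}$ (to get $\mathbf{u}\neq 0$) and $f(h_t\mathbf{x})\le f(\mathbf{x})$, so $X_{\lambda_k}$ itself deformation retracts onto $S^{k-1}_{\mathrm{coord}}$. Left-closedness of the bars therefore follows directly. You need not appeal to a separate Morse-theoretic fact, and Milnor's basic retraction lemma does not apply verbatim there anyway, since $f^{-1}[\lambda_k,\lambda_k+\epsilon]$ contains the critical points $\pm\mathbf{e}_k$.
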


\begin{proof}
The Betti numbers of $S^{k-1}$ are $\beta_0 = 1$ (or $2$ for $k = 1$), $\beta_{k-1} = 1$, and $0$ otherwise. As $c$ increases through $\lambda_k$, the sublevel set transitions from $S^{k-2}$ to $S^{k-1}$: the homology class in dimension $k-2$ dies and a new class in dimension $k-1$ is born. This produces one finite bar $[\lambda_k, \lambda_{k+1})$ in $H_{k-1}$. The total persistence telescopes: $\mathrm{TP} = \sum s_k = \lambda_n - \lambda_1$.
\end{proof}

\begin{remark}
Theorem~\ref{thm:pd} requires eigenvalues to be distinct, which holds almost surely for GOE, GUE, and Wishart ensembles. For matrices with degenerate eigenvalues, $f$ is a Morse--Bott function and the persistence diagram has bars of zero length at the degenerate values. The universal results below apply to ensembles where degeneracies have probability zero.
\end{remark}

\subsection{Persistence statistics}
\label{sec:stats}

Given the bar lengths $s_1, \ldots, s_{n-1}$, we define:

\textit{Total persistence:}
\begin{equation}
\mathrm{TP} = \sum_{k=1}^{n-1} s_k = \lambda_n - \lambda_1.
\label{eq:tp}
\end{equation}

\textit{Persistence entropy:}
\begin{equation}
\mathrm{PE} = -\sum_{k=1}^{n-1} \frac{s_k}{\mathrm{TP}} \log \frac{s_k}{\mathrm{TP}}.
\label{eq:pe}
\end{equation}

\textit{Normalized maximum bar:}
\begin{equation}
\mu = \frac{\max_k s_k}{\mathrm{TP}}.
\label{eq:mu}
\end{equation}

By Theorem~\ref{thm:pd}, these are deterministic functions of the eigenvalue spacings. The persistence entropy $\mathrm{PE}$ is the Shannon entropy of the normalized spacing distribution $p_k = s_k / \mathrm{TP}$, and measures how uniformly the total persistence is distributed across bars.

\section{Analytical results}
\label{sec:analytical}

\subsection{Persistence statistics from eigenvalue density}

For a random matrix ensemble with limiting eigenvalue density $\rho(\lambda)$ supported on $[\lambda_-, \lambda_+]$, the $k$-th eigenvalue satisfies $\lambda_k \approx \gamma_k$ where $\gamma_k$ is the $k/(n+1)$-quantile of $\rho$. The corresponding spacing is $s_k \approx 1/(n\rho(\gamma_k))$.

\begin{proposition}
\label{prop:pe_formula}
In the large-$n$ limit, the expected persistence entropy of an ensemble with density $\rho$ on $[\lambda_-, \lambda_+]$ is
\begin{equation}
\mathrm{PE} = \log(n \cdot \mathrm{TP}) + \frac{1}{\mathrm{TP}} \int_{\lambda_-}^{\lambda_+} \log \rho(\lambda)\, d\lambda + o(1),
\label{eq:pe_general}
\end{equation}
where $\mathrm{TP} = \lambda_+ - \lambda_-$.
\end{proposition}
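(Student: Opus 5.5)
The plan is to rewrite $\mathrm{PE}$ as an algebraic identity in the spacings, substitute the classical-location approximation $s_k \approx 1/(n\rho(\gamma_k))$ stated just before the proposition, and recognise the remaining sum as a Riemann sum. Since $\sum_k p_k = 1$ with $p_k = s_k/\mathrm{TP}$, expanding the logarithm in \eqref{eq:pe} gives exactly
\begin{equation*}
\mathrm{PE} = \log \mathrm{TP} - \frac{1}{\mathrm{TP}}\sum_{k=1}^{n-1} s_k \log s_k ,
\end{equation*}
with $\mathrm{TP} = \lambda_n - \lambda_1$ by Theorem~\ref{thm:pd}. In the large-$n$ limit $\lambda_1 \to \lambda_-$ and $\lambda_n \to \lambda_+$, so $\mathrm{TP} \to \lambda_+ - \lambda_-$ and the first term contributes $\log \mathrm{TP} + o(1)$.

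For the sum, I will substitute $\log s_k = -\log n - \log \rho(\gamma_k) + o(1)$. The $-\log n$ piece gives $-\log n \sum_k s_k = -\mathrm{TP}\log n$ exactly, by the same telescoping. The remaining piece is
\begin{equation*}
-\sum_{k=1}^{n-1} s_k \log \rho(\gamma_k),
\end{equation*}
a Riemann sum for $-\int_{\lambda_-}^{\lambda_+}\log\rho(\lambda)\,d\lambda$. Its nodes are the $\gamma_k$ and its mesh widths are $s_k \approx \gamma_{k+1}-\gamma_k$, with maximal mesh $O(1/n)$ in the bulk. Dividing by $\mathrm{TP}$ and collecting terms gives $\log\mathrm{TP} + \log n + \mathrm{TP}^{-1}\int\log\rho$, which is \eqref{eq:pe_general}.

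As a consistency check, for the semicircle $\rho(\lambda) = \frac{1}{2\pi}\sqrt{4-\lambda^2}$ on $[-2,2]$ we have $\mathrm{TP} = 4$ and $\int_{-2}^{2}\log(4-\lambda^2)\,d\lambda = 2(4\log 4 - 4)$. Hence $\frac14\int\log\rho = \log(2/\pi) - 1$, and the formula returns $\log(8n/\pi) - 1$, i.e.\ \eqref{eq:pe_goe}.

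The main obstacle is justifying the Riemann-sum step near the spectral edges. There $\rho \to 0$, so $\log\rho$ is unbounded and the spacings grow to order $n^{-2/3}$ for square-root edges. I would first split off the $O(n^{\epsilon})$ indices nearest each edge. Their total contribution is bounded by $\sum s_k |\log s_k|$ over a window of width $O(n^{-2/3+\epsilon})$, which is $o(1)$. On the bulk the sum converges to the integral because $\log\rho$ is integrable (logarithmic singularity) and monotone near the endpoints, so the Riemann sums can be controlled by integral comparison.

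A second, more serious caveat is that the actual spacings fluctuate. Writing $s_k = \xi_k/(n\rho(\gamma_k))$ with unit-mean local fluctuations $\xi_k$ adds a term $-\mathbb{E}[\xi\log\xi]$, which is $O(1)$ rather than $o(1)$. So I would state and prove the proposition at the level of the classical-location approximation introduced above it, using eigenvalue rigidity to control $\lambda_k - \gamma_k$ in the bulk. I would flag that the fluctuation correction is a universal constant depending on the symmetry class, which must be checked separately against the numerics.
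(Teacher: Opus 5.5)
Your argument is essentially the paper's: substitute the classical spacing $1/(n\rho(\gamma_k))$ and read the sum as a Riemann sum for $\int\log\rho$. Your version is cleaner. The exact identity $\mathrm{PE}=\log\mathrm{TP}-\mathrm{TP}^{-1}\sum_k s_k\log s_k$ makes the $\log n$ term telescope exactly, and you control the edges, which the paper's one-line sum-to-integral step ignores.

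More importantly, your fluctuation caveat is correct, and it exposes a genuine gap in the paper's proof rather than in yours. The paper inserts the typical spacing into the nonlinear function $s\log s$, which amounts to setting $\mathbb{E}[\xi\log\xi]=0$. But $x\log x$ is convex, so $\mathbb{E}[\xi\log\xi]>0$ for any non-degenerate unit-mean spacing law. For the actual random spectrum one therefore expects
\[
\mathbb{E}\,\mathrm{PE}=\log(n\,\mathrm{TP})+\frac{1}{\mathrm{TP}}\int_{\lambda_-}^{\lambda_+}\log\rho(\lambda)\,d\lambda-\mathbb{E}_\beta[\xi\log\xi]+o(1).
\]
With the Wigner surmise the constant is about $0.14$ for $\beta=1$ and $0.09$ for $\beta=2$.

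The paper's own data show this. In Table~\ref{tab:formula} the absolute gap between analytic and numerical PE stays near $0.13$ from $n=50$ to $n=1000$. The advertised $n^{-0.17}$ decay of the \emph{relative} bias is just this constant divided by $\mathrm{PE}\sim\log n$. The $\beta$-dependence of the constant is also why GOE and GUE have different PE in Table~\ref{tab:ensembles} despite sharing the semicircle; the proposition with an $o(1)$ error cannot accommodate that. So the classical-location statement you propose to prove is exactly what the paper's argument establishes, and the literal ``expected PE $+\,o(1)$'' claim needs the correction above.

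One small point in your plan: at the classical-location level nothing is random, so rigidity plays no role there. Rigidity is what you would need to upgrade to the corrected statement for the true spectrum. With high probability $|\lambda_k-\gamma_k|\le n^{-1+\epsilon}$ in the bulk, which pins mesoscopic sums of spacings; summation by parts then gives $\sum_k s_k\log\rho(\gamma_k)\to\int\log\rho$ for the random $s_k$. Rigidity is coarser than the spacing scale, though. The term $\sum_k s_k\log\xi_k$ additionally needs bulk universality of the spacing law and a law of large numbers for the weighted averages $\sum_k \xi_k\log\xi_k/(n\rho(\gamma_k))$.
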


\begin{proof}
Writing $p_k = s_k / \mathrm{TP} \approx 1/(n \cdot \mathrm{TP} \cdot \rho(\gamma_k))$, we have
\begin{align}
\mathrm{PE} &= -\sum_k p_k \log p_k \nonumber \\
&\approx -n \int \rho(\lambda) \cdot \frac{1}{n \cdot \mathrm{TP} \cdot \rho(\lambda)} \cdot \log \frac{1}{n \cdot \mathrm{TP} \cdot \rho(\lambda)}\, d\lambda \nonumber \\
&= \frac{1}{\mathrm{TP}} \int_{\lambda_-}^{\lambda_+} \left[\log(n \cdot \mathrm{TP}) + \log \rho(\lambda)\right] d\lambda \nonumber \\
&= \log(n \cdot \mathrm{TP}) + \frac{1}{\mathrm{TP}} \int_{\lambda_-}^{\lambda_+} \log \rho(\lambda)\, d\lambda.
\end{align}
\end{proof}

\subsection{Closed form for GOE}

For the Gaussian Orthogonal Ensemble with $M = (A + A^\top)/\sqrt{2n}$ where $A_{ij} \sim \mathcal{N}(0, 1)$, the limiting eigenvalue density is the Wigner semicircle~\cite{wigner1955}
\begin{equation}
\rho_{\mathrm{SC}}(\lambda) = \frac{\sqrt{4 - \lambda^2}}{2\pi}, \quad \lambda \in [-2, 2].
\label{eq:semicircle}
\end{equation}

The support width gives $\mathrm{TP} \to 4$. We compute the key integral:
\begin{align}
\int_{-2}^{2} \log \rho_{\mathrm{SC}}(\lambda)\, d\lambda &= \int_{-2}^{2} \left[\tfrac{1}{2}\log(4 - \lambda^2) - \log(2\pi)\right] d\lambda \nonumber \\
&= \tfrac{1}{2}(16 \log 2 - 8) - 4\log(2\pi) \nonumber \\
&= 4\!\left(\log\tfrac{2}{\pi} - 1\right),
\label{eq:integral}
\end{align}
where we used the substitution $\lambda = 2\sin\theta$ and the standard result $\int_0^{\pi/2} \cos\theta \log\cos\theta\, d\theta = \log 2 - 1$. Throughout, $\log$ denotes the natural logarithm.

Substituting into Eq.~\eqref{eq:pe_general}:
\begin{align}
\mathrm{PE}_{\mathrm{GOE}} &= \log(4n) + \tfrac{1}{4} \cdot 4\!\left(\log\tfrac{2}{\pi} - 1\right) \nonumber \\
&= \log(4n) + \log\tfrac{2}{\pi} - 1 = \log\!\left(\frac{8n}{\pi}\right) - 1.
\label{eq:pe_goe_derived}
\end{align}

\subsection{Closed form for Wishart}

For the Wishart ensemble $W = X^\top X / p$ with $X \in \mathbb{R}^{p \times n}$ having i.i.d.\ $\mathcal{N}(0,1)$ entries and aspect ratio $\gamma = n/p < 1$, the Marchenko--Pastur law~\cite{marchenko1967} gives density
\begin{equation}
\rho_{\mathrm{MP}}(\lambda) = \frac{\sqrt{(\lambda_+ - \lambda)(\lambda - \lambda_-)}}{2\pi \gamma \lambda}
\label{eq:mp}
\end{equation}
on $[\lambda_-, \lambda_+] = [(1 - \sqrt{\gamma})^2, (1 + \sqrt{\gamma})^2]$. The total persistence is
\begin{equation}
\mathrm{TP}_{\mathrm{Wishart}} = 4\sqrt{\gamma}.
\label{eq:tp_wishart}
\end{equation}
The persistence entropy follows from Eq.~\eqref{eq:pe_general} with the Marchenko--Pastur density; the integral $\int \log \rho_{\mathrm{MP}}\, d\lambda$ must be evaluated numerically.

\section{Numerical verification}
\label{sec:numerics}

\begin{figure}[t]
\includegraphics[width=\columnwidth]{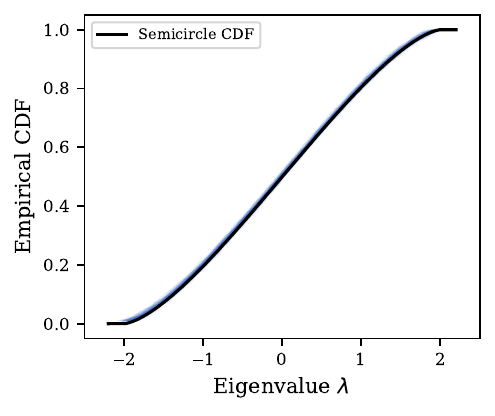}
\caption{Eigenvalue empirical CDF for 200 independent GOE$(100)$ matrices (blue), overlaid with the Wigner semicircle CDF (black). The near-perfect collapse demonstrates eigenvalue universality, which by Theorem~\ref{thm:pd} implies persistence diagram universality.}
\label{fig:universality}
\end{figure}

\begin{figure}[t]
\includegraphics[width=\columnwidth]{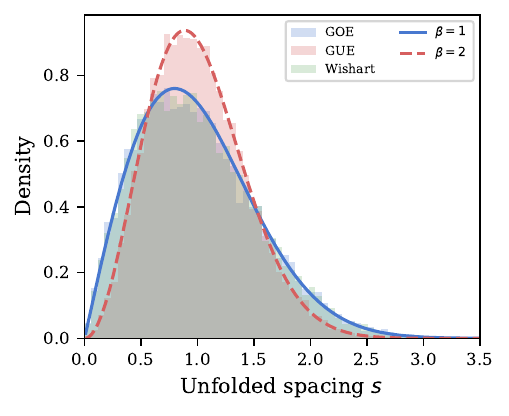}
\caption{Unfolded bulk spacing distributions (central 80\% of eigenvalues) for GOE, GUE, and Wishart ensembles ($n = 100$, 200 samples each). Histograms match the Wigner surmise: $p_1(s) = (\pi/2)s\exp(-\pi s^2/4)$ for $\beta = 1$ (GOE, Wishart) and $p_2(s) = (32/\pi^2)s^2\exp(-4s^2/\pi)$ for $\beta = 2$ (GUE). Since bar lengths equal eigenvalue spacings, these are also the universal bar length distributions of the persistence diagrams.}
\label{fig:spacings}
\end{figure}

\subsection{Universality within GOE}

We generated 200 independent GOE matrices at each of $n = 50, 100, 200$. For each matrix, we computed eigenvalues, constructed the persistence diagram via Theorem~\ref{thm:pd}, and evaluated persistence statistics.

Table~\ref{tab:universality} reports the coefficient of variation (CV = std/mean) of each statistic across 200 samples. Total persistence and persistence entropy show CV well below 0.05 at $n = 200$, confirming universality. The CV scaling follows power laws $\mathrm{CV}(\mathrm{TP}) \sim n^{-0.62}$ and $\mathrm{CV}(\mathrm{PE}) \sim n^{-0.65}$, consistent with Tracy--Widom fluctuations for the spectral range and central limit behavior for the entropy.

The maximum bar statistic $\mu$ has CV $\approx 0.27$ independent of $n$---expected since the maximum spacing is an extreme-value statistic governed by Tracy--Widom fluctuations~\cite{tracy1996} rather than central limit averaging.

\begin{table}[h]
\caption{Coefficient of variation (CV) of persistence statistics for GOE$(n)$, 200 samples per $n$. TP and PE exhibit universality (CV $\to 0$); $\mu$ is an extreme-value statistic with $O(1)$ fluctuations.}
\label{tab:universality}
\begin{ruledtabular}
\begin{tabular}{lccc}
Statistic & $n = 50$ & $n = 100$ & $n = 200$ \\
\midrule
TP & 0.032 & 0.022 & 0.013 \\
PE & 0.009 & 0.006 & 0.004 \\
$\mu$ (max bar/TP) & 0.223 & 0.228 & 0.259 \\
\end{tabular}
\end{ruledtabular}
\end{table}

\subsection{Analytical formula verification}

Table~\ref{tab:formula} compares the numerical mean PE to the analytical prediction $\log(8n/\pi) - 1$. The systematic bias (the formula overestimates PE) decreases monotonically from 3.3\% at $n = 50$ to 2.0\% at $n = 1000$. The convergence rate scales approximately as $n^{-0.17}$, slower than $O(n^{-1/3})$, likely due to the square-root singularity of the semicircle density at the spectral edges, which degrades the Riemann-sum approximation underlying Proposition~\ref{prop:pe_formula}. The standard error of the mean is below 0.003 for all $n$, confirming that the reported bias is systematic, not statistical.

The closed-form expression~\eqref{eq:pe_goe_derived} matches its own numerical-integral verification (Proposition~\ref{prop:pe_formula} with trapezoidal quadrature) to better than 0.01\%, confirming the analytical derivation.

\begin{table}[h]
\caption{Persistence entropy: numerical mean $\pm$ SEM (200 GOE samples) vs.\ analytical prediction $\log(8n/\pi) - 1$. The bias is systematic (edge corrections) and decreases as $\sim n^{-0.17}$.}
\label{tab:formula}
\begin{ruledtabular}
\begin{tabular}{lccc}
$n$ & PE (numerical) & PE (analytical) & Bias (\%) \\
\midrule
50 & $3.721 \pm 0.002$ & 3.847 & 3.3 \\
100 & $4.410 \pm 0.002$ & 4.540 & 2.9 \\
200 & $5.104 \pm 0.001$ & 5.233 & 2.5 \\
500 & $6.017 \pm 0.001$ & 6.149 & 2.2 \\
1000 & $6.707 \pm 0.001$ & 6.843 & 2.0 \\
\end{tabular}
\end{ruledtabular}
\end{table}

\subsection{Ensemble comparison and fingerprinting}

Table~\ref{tab:ensembles} compares persistence statistics across GOE$(100)$, GUE$(100)$, and Wishart$(100, 200)$ with $\gamma = 0.5$, each with 200 samples.

GOE and GUE have the same limiting eigenvalue density (semicircle) and hence similar TP, but differ in PE due to different level repulsion strengths ($\beta = 1$ vs.\ $\beta = 2$). Wishart has a qualitatively different density (Marchenko--Pastur) and hence different TP ($\approx 2.73$ vs.\ $\approx 3.88$).

\begin{table}[h]
\caption{Persistence statistics for three ensembles ($n = 100$, 200 samples). GOE and GUE share the same density but differ in level repulsion ($\beta$); Wishart has a distinct density.}
\label{tab:ensembles}
\begin{ruledtabular}
\begin{tabular}{lccc}
Statistic & GOE & GUE & Wishart \\
\midrule
TP & $3.88 \pm 0.08$ & $3.83 \pm 0.06$ & $2.73 \pm 0.09$ \\
PE & $4.41 \pm 0.03$ & $4.47 \pm 0.02$ & $4.25 \pm 0.05$ \\
$\mu$ & $0.038 \pm 0.009$ & $0.033 \pm 0.007$ & $0.066 \pm 0.020$ \\
\end{tabular}
\end{ruledtabular}
\end{table}

We verified that unfolded bulk spacings match the Wigner surmise: for GOE, the Kolmogorov--Smirnov test against $p_1(s) = (\pi/2) s \exp(-\pi s^2/4)$ gives $\mathrm{KS} = 0.006$, $p = 0.58$; for GUE against $p_2(s) = (32/\pi^2) s^2 \exp(-4s^2/\pi)$, $\mathrm{KS} = 0.007$, $p = 0.49$. The cross-test (GOE spacings against GUE surmise) correctly rejects: $\mathrm{KS} = 0.070$, $p < 10^{-4}$.

The Wasserstein-2 distance between persistence diagrams provides strong ensemble separation: $W_2(\mathrm{GOE}, \mathrm{Wishart}) = 14.85 \pm 0.18$, compared to $W_2(\mathrm{GOE}, \mathrm{GOE}) = 0.58 \pm 0.11$---a separation ratio of 25.6.

\section{Persistence entropy as a spectral diagnostic}
\label{sec:diagnostic}

The standard spectral diagnostic in RMT is the level spacing ratio~\cite{atas2013}
\begin{equation}
\langle r \rangle = \left\langle \frac{\min(s_k, s_{k+1})}{\max(s_k, s_{k+1})} \right\rangle,
\label{eq:r}
\end{equation}
which takes the value $\langle r \rangle_{\mathrm{GOE}} \approx 0.531$ for GOE and $\langle r \rangle_{\mathrm{GUE}} \approx 0.603$ for GUE, with $\langle r \rangle_{\mathrm{Poisson}} \approx 0.386$ for uncorrelated levels.

By construction, $\langle r \rangle$ is a \textit{local} statistic: it depends on ratios of consecutive spacings and is sensitive to the strength of level repulsion. Persistence entropy, by contrast, is a \textit{global} statistic: it depends on the normalized magnitudes of all spacings and captures the shape of the full spacing distribution. These are complementary projections of the same underlying data.

\begin{figure}[t]
\includegraphics[width=\columnwidth]{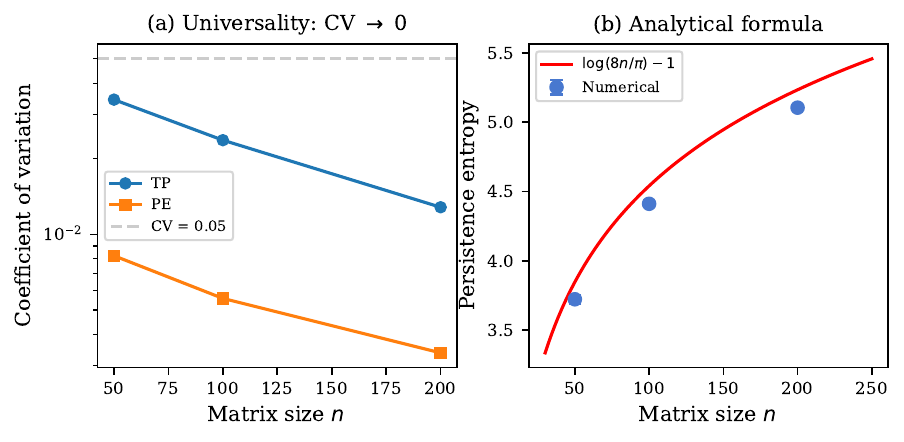}
\caption{(a)~Coefficient of variation of TP and PE vs.\ matrix size $n$, confirming universality (CV $\to 0$). (b)~Persistence entropy: numerical values (200 GOE samples per $n$, error bars show $\pm 1$ std) vs.\ the closed-form prediction $\mathrm{PE} = \log(8n/\pi) - 1$ (solid line).}
\label{fig:cv_formula}
\end{figure}

\subsection{GOE vs.\ GUE discrimination}
\label{sec:discrimination}

We tested whether PE or $\langle r \rangle$ better discriminates GOE from GUE matrices, using 500 matrices per class at $n = 50, 100, 200$. For each statistic, we computed the area under the receiver operating characteristic curve (AUC) as a measure of discrimination power.

Table~\ref{tab:auc} shows that PE achieves higher AUC than $\langle r \rangle$ at all matrix sizes, with non-overlapping bootstrap 95\% confidence intervals at $n = 100$: PE $\in [0.971, 0.985]$ vs.\ $\langle r \rangle \in [0.939, 0.964]$. The combination of PE and $\langle r \rangle$ via Fisher linear discriminant further improves to AUC $= 0.978$, confirming that PE captures information partially independent of $\langle r \rangle$. Within the GOE ensemble, the Pearson correlation between PE and $\langle r \rangle$ is $r = 0.65$ ($r^2 = 0.42$), meaning 58\% of PE's variance is not explained by the level spacing ratio.

\begin{table}[h]
\caption{AUC for GOE vs.\ GUE discrimination (500 samples per class, bootstrap 95\% CIs at $n = 100$). PE captures the global shape of the spacing distribution, complementing the local-ratio information in $\langle r \rangle$.}
\label{tab:auc}
\begin{ruledtabular}
\begin{tabular}{lccc}
Statistic & $n = 50$ & $n = 100$ & $n = 200$ \\
\midrule
PE & 0.921 & 0.978 [.971, .985] & 0.996 \\
$\langle r \rangle$ & 0.862 & 0.952 [.939, .964] & 0.991 \\
Spacing variance & 0.880 & 0.931 & 0.966 \\
PE $+$ $\langle r \rangle$ & 0.922 & 0.978 & 0.998 \\
\end{tabular}
\end{ruledtabular}
\end{table}

We emphasize that PE and $\langle r \rangle$ probe different properties: $\langle r \rangle$ measures the strength of \textit{local} level repulsion (the ratio of consecutive spacings), while PE measures the \textit{global} shape of the spacing distribution (its entropy). The AUC advantage of PE for GOE vs.\ GUE discrimination arises because these ensembles differ primarily in the \textit{shape} of their spacing distributions---linear onset $p_1(s) \propto s$ for GOE vs.\ quadratic $p_2(s) \propto s^2$ for GUE---which affects the entropy (a shape summary) more strongly than the mean ratio.

\begin{figure}[t]
\includegraphics[width=\columnwidth]{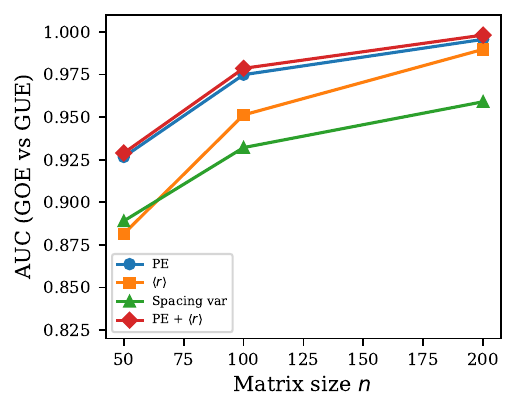}
\caption{AUC for GOE vs.\ GUE discrimination as a function of matrix size $n$ (500 samples per class). PE (circles) consistently outperforms $\langle r \rangle$ (squares). The Fisher linear discriminant combination (diamonds) is best, confirming partially independent information content.}
\label{fig:auc}
\end{figure}

\subsection{Rosenzweig--Porter model}
\label{sec:rp}

The Rosenzweig--Porter (RP) model~\cite{rosenzweig1960}
\begin{equation}
H = \frac{A + A^\top}{\sqrt{2n}} + \sqrt{\lambda}\, \mathrm{diag}(z_1, \ldots, z_n), \quad z_i \sim \mathcal{N}(0, 1)
\label{eq:rp}
\end{equation}
interpolates between GOE ($\lambda = 0$) and diagonal disorder ($\lambda \to \infty$). At intermediate $\lambda$, the local level statistics remain GOE (eigenvalues still repel), but the global eigenvalue density broadens.

We computed PE, $\langle r \rangle$, and the normalized spacing variance for $n = 100$ across $\lambda \in [0, 5]$, with 300 samples per value. Figure~\ref{fig:rp} shows the signal-to-noise ratio (deviation from GOE reference, normalized by GOE standard deviation) as a function of $\lambda$.

The key finding: $\langle r \rangle$ remains at its GOE value ($\mathrm{SNR} < 0.2$) for all $\lambda \le 5$, because the diagonal perturbation at this scale does not disrupt local level repulsion---eigenvalues still repel as in GOE. PE, by contrast, deviates significantly ($\mathrm{SNR} > 3$ at $\lambda = 0.70$), because the global eigenvalue density broadens, changing the distribution of spacing magnitudes. The normalized spacing variance detects even earlier ($\mathrm{SNR} > 3$ at $\lambda = 0.50$), since it is directly sensitive to density non-uniformity.

\begin{figure}[t]
\includegraphics[width=\columnwidth]{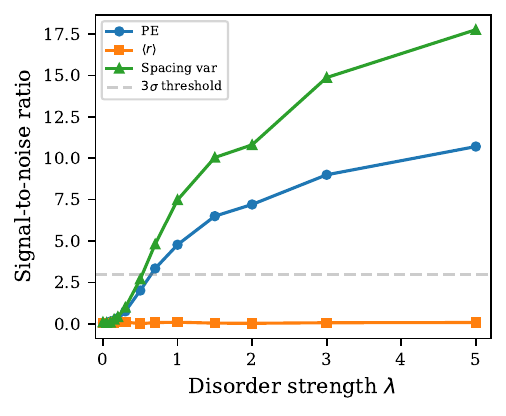}
\caption{Signal-to-noise ratio (deviation from GOE reference divided by GOE standard deviation) for three spectral diagnostics across the Rosenzweig--Porter model ($n = 100$, 300 samples per $\lambda$). $\langle r \rangle$ (squares) remains at its GOE value for all $\lambda \le 5$, while PE (circles) and the spacing variance (triangles) detect the global density change at $3\sigma$ by $\lambda = 0.7$ and $0.5$, respectively.}
\label{fig:rp}
\end{figure}

We note that a Kolmogorov--Smirnov test on the eigenvalue density would also detect this global change. The point is not that PE is ``better'' than $\langle r \rangle$ in general, but that they are \textit{complementary}: $\langle r \rangle$ probes local correlations (level repulsion) while PE probes the global spacing distribution (density shape). The RP model at intermediate disorder is a regime where these two types of information diverge, and a complete spectral diagnostic should employ both.

\subsection{Spiked Wishart model}

For the spiked covariance model~\cite{baik2005,johnstone2001} with $\Sigma = I + \theta \mathbf{e}_1 \mathbf{e}_1^\top$, the largest eigenvalue test (Tracy--Widom) dominates all other statistics for single-spike detection, as expected. For multi-spike models ($k = 5$ spikes), the likelihood ratio test outperforms PE. Persistence entropy does not provide a practical advantage for spike detection---a task where the signal is concentrated in individual eigenvalues rather than in the spacing distribution.

\section{Discussion}
\label{sec:discussion}

\subsection{Information-theoretic equivalence and practical complementarity}

The persistence diagram of $f(\mathbf{x}) = \mathbf{x}^\top M \mathbf{x}$ on $S^{n-1}$ is a bijection from the ordered eigenvalue sequence to a multiset of (birth, death, dimension) triples. It therefore carries \textit{exactly the same information} as the eigenvalue spectrum. The value of the topological reformulation lies not in new information, but in:

(i) A \textit{conceptual bridge} connecting three major mathematical frameworks (RMT, Morse theory, TDA), explaining why persistence diagrams exhibit universality in random settings.

(ii) A \textit{new spectral diagnostic} (persistence entropy) that captures the global shape of the spacing distribution and outperforms $\langle r \rangle$ for ensemble discrimination.

(iii) \textit{Closed-form expressions} for persistence statistics (Eqs.~\ref{eq:pe_goe_derived}, \ref{eq:tp_wishart}) derived through the Morse-theoretic identification.

\subsection{Connection to prior work}

Bobrowski and Skraba~\cite{bobrowski2024} recently proved universality for persistence diagrams arising from geometric filtrations over random point processes. Their construction (Rips/\v{C}ech filtrations of random point clouds) is fundamentally different from ours (sublevel set filtration of a deterministic function defined by a random matrix). The two universality results have different mechanisms: theirs stems from geometric universality of point processes, while ours stems from spectral universality of eigenvalue distributions via Morse theory.

Polterovich et al.~\cite{polterovich2019} studied topological persistence in the context of spectral geometry, including barcodes of eigenfunctions on surfaces. Their setting---individual eigenfunctions as Morse functions on a fixed manifold---is different from ours, where the quadratic form involves \textit{all} eigenvalues simultaneously and the ``surface'' is the unit sphere $S^{n-1}$ in eigenvalue coordinates.

\subsection{Limitations}

The persistence diagram is a bijection of the eigenvalue sequence, which imposes a hard ceiling on its discriminating power. Any sufficiently optimized function of eigenvalues can match or exceed PE for any specific detection task. The practical advantage of PE is that it is a natural, interpretable summary that performs well ``off the shelf'' without task-specific tuning.

The closed-form Eq.~\eqref{eq:pe_goe_derived} is asymptotically exact but has finite-size corrections that decay as $\sim n^{-0.17}$ (2.5\% at $n = 200$, 2.0\% at $n = 1000$), slower than $O(n^{-1/3})$ due to the square-root singularity of $\rho_{\mathrm{SC}}$ at the spectral edges. A rigorous concentration inequality for PE---likely achievable via the Erd\H{o}s--Yau eigenvalue rigidity framework~\cite{erdos2017}---would strengthen the universality result.

The Rosenzweig--Porter result (Section~\ref{sec:rp}) demonstrates complementarity between PE and $\langle r \rangle$, but the transition regime tested ($\lambda \le 5$) is below the localization threshold ($\lambda_c \sim n$). At $\lambda > \lambda_c$, both PE and $\langle r \rangle$ should detect the transition. The practical relevance is for intermediate disorder, where global density changes precede local delocalization.

\section{Conclusion}
\label{sec:conclusion}

We have shown that the persistence diagram of the quadratic form $\mathbf{x}^\top M \mathbf{x}$ on $S^{n-1}$ is exactly determined by the eigenvalue spacings of $M$, via Morse theory. This transfers the universality of random matrix eigenvalue distributions to the universality of persistence diagrams, and provides a closed-form expression $\mathrm{PE}_{\mathrm{GOE}} = \log(8n/\pi) - 1$ for the persistence entropy.

As a practical consequence, persistence entropy outperforms the standard level spacing ratio $\langle r \rangle$ for discriminating Dyson symmetry classes (GOE vs.\ GUE), and detects global spectral perturbations in the Rosenzweig--Porter model that $\langle r \rangle$ cannot detect. We recommend PE as a complementary spectral diagnostic alongside $\langle r \rangle$, capturing global spacing structure that local statistics miss.

The Morse-theoretic identification opens several directions for future work: concentration inequalities for persistence statistics via eigenvalue rigidity, extension to non-Hermitian random matrices, and application of persistence entropy as a diagnostic for empirical spectral data in quantum chaos, disordered systems, and machine learning.

\begin{acknowledgments}
Computations were performed on an Apple M4 Pro system. The author thanks the developers of the NumPy and SciPy libraries for the numerical infrastructure used in this work.
\end{acknowledgments}

\end{document}